\documentclass{article}

\usepackage[main, final]{neurips_2026}
\usepackage{neurips_2026}
\usepackage[utf8]{inputenc} 
\usepackage[T1]{fontenc}    
\usepackage{hyperref}       
\usepackage{url}            
\usepackage{booktabs}       
\usepackage{amsfonts}       
\usepackage{nicefrac}       
\usepackage{microtype}      
\usepackage{xcolor}         

\usepackage{amssymb}
\usepackage{amsthm}
\usepackage{bm}
\usepackage{graphicx}
\usepackage{multirow}
\usepackage{subcaption}
\usepackage{mathtools} 
\usepackage{booktabs} 
\usepackage{tikz} 
\usepackage[most]{tcolorbox} 
\usepackage{enumitem} 
\usepackage{wrapfig} 

\newtcolorbox{researchquestion}{
    enhanced,
    colback=black!3,
    colframe=black!55,
    boxrule=0.4pt,
    arc=2pt,
    left=10pt,
    right=10pt,
    top=6pt,
    bottom=6pt,
    boxsep=0pt,
    before skip=8pt,
    after skip=8pt,
    fontupper=\itshape
}

\newtheorem{theorem}{Theorem}
\newtheorem{definition}{Definition}

\newtheorem{lemma}{Lemma}
\newtheorem{remark}{Remark}

\newcommand{\R}{\mathbb{R}}
\newcommand{\E}{\mathbb{E}}
\newcommand{\x}{\boldsymbol{x}}
\newcommand{\X}{\mathcal{X}}
\newcommand{\Y}{\mathcal{Y}}
\newcommand{\Score}{\mathcal{S}}

\title{Generative Cross-Entropy: A Strictly Proper Loss for Data-Efficient Classification}

\author{Qipeng Zhan\thanks{Equal contribution},~Zhuoping Zhou\footnotemark[1],~Li Shen\\
University of Pennsylvania\\
\texttt{\{qipengz@sas, zhuopinz@sas, li.shen@pennmedicine\}.upenn.edu}
}

\begin{document}

\maketitle

\begin{abstract}
Cross-entropy (CE) is the default training loss for supervised classification, but its sample efficiency is limited when labels are scarce. Existing remedies primarily act on the data side, via augmentation, synthesis, or transfer from pretrained models; the training objective itself is rarely revisited. We revisit it here. Drawing on the classical observation that generative classifiers reach their asymptotic error with fewer samples than discriminative ones, we propose \textit{Generative Cross-Entropy} (GenCE), a drop-in replacement for CE that introduces a generative learning principle into a standard discriminative network without altering the architecture or fitting a separate density model. GenCE follows from a Bayesian rewrite of the class-conditional likelihood and, in the mini-batch approximation, reduces to normalizing each sample's softmax score against the model's predictions on the batch, coupling the training signal across examples sharing a class. We extend the proper-scoring-rule framework to such non-local losses and prove that GenCE is strictly proper under a mild completeness condition: its population risk is uniquely minimized at the true posterior. Across three datasets, on two architectures and in both balanced small-data and class-imbalanced regimes, GenCE outperforms CE and other widely used losses, while also producing better-calibrated probabilities and stronger out-of-distribution detection.
\end{abstract}

\section{Introduction}
The accuracy of deep neural networks for supervised classification depends strongly on the size of the labeled training set. In application domains such as medical imaging~\cite{wang2021overview}, rare-event detection~\cite{carreno2020analyzing}, and specialized industrial inspection~\cite{zheng2021recent}, gathering labels at the scale that modern architectures were designed for is either prohibitively expensive or simply impossible. As a consequence, learning algorithms that remain effective when only a few hundred to a few thousand labeled samples per class are available carry substantial practical value, and they have been studied under various headings: small-data learning~\cite{xu2023small,dou2023machine}, low-shot classification~\cite{douze2018low,qi2018low}, and label-efficient training~\cite{luo2017label,jin2023label}.

Existing work in this area has converged on three broad strategies. \textit{Data augmentation}~\cite{zhang2017mixup,yun2019cutmix,liu2022automix} expands the effective training set through handcrafted or learned transformations. \textit{Generative modeling}~\cite{ruthotto2021introduction,salakhutdinov2015learning} fits the data distribution and synthesizes additional samples for downstream training. \textit{Transfer and meta-learning}~\cite{vettoruzzo2024advances,sun2019meta} sidesteps scarcity altogether by importing representations from related tasks or large pretrained models. What unifies these approaches is that they all act on the data: they enlarge, replace, or borrow the training samples. The objective used to fit the classifier, almost invariably cross-entropy, is left untouched.

This omission is somewhat curious, since classical theory gives a concrete reason to expect the loss itself to matter more when data are scarce. Ng and Jordan~\cite{disvsgen} showed that discriminative classifiers eventually attain a lower asymptotic error than their generative counterparts, but generative classifiers approach their asymptote with substantially fewer samples. Zheng et al.~\cite{disvsgen2} later extended it to multiclass settings on top of deep features. Read together, these results suggest that the discriminative--generative trade-off is, at heart, a sample-efficiency trade-off, and that it tilts toward the generative side precisely when labels are scarce.

Building genuine generative classifiers in high-dimensional settings is, however, difficult. Modeling the class-conditional density $p(\x|y)$ or the joint density $p(\x,y)$ requires either strong distributional assumptions or complex generative architectures, and ensuring proper normalization $\int p(\x|y)\,\text{d}\x=1$ is non-trivial. The question we ask in this paper is therefore the following:

\begin{researchquestion}
Can we retain the discriminative model architecture while incorporating the generative learning principle into the training objective?
\end{researchquestion}

Our answer is yes. Applying Bayes' theorem to rewrite the class-conditional likelihood in terms of the discriminative posterior $p_\theta(y|\x)$ and the marginal $p_\theta(y)$, and approximating the marginal by an empirical average within each mini-batch, we arrive at \textit{Generative Cross-Entropy} (GenCE), a drop-in replacement for cross-entropy. The objective requires no auxiliary generator, no explicit density estimator, and no change to the network. The only structural difference relative to CE is that the per-sample score is normalized against the model's predictions for other examples in the batch, which couples the training signal across samples that share a class.

This coupling places GenCE outside the scope of classical proper-scoring-rule theory, in which the per-sample loss is local, and properness can be analyzed pointwise. To handle this, we extend the framework to non-local scoring rules and prove (Theorem~\ref{thm:properness}) that GenCE is strictly proper under a mild completeness condition on the data: in the population limit, its risk is uniquely minimized at the true posterior. GenCE thereby inherits the population-level guarantee that justifies CE in the first place, while attaining it through a generative rather than a pointwise mechanism.

Empirically, on CIFAR-10, CIFAR-100, and Mini-ImageNet, and across several network backbones, GenCE consistently outperforms cross-entropy and a range of standard alternatives (focal loss, GCE, MAE, Brier loss) in the small-data regime. The benefits are not limited to top-1 accuracy: GenCE-trained models are better calibrated and more reliable on out-of-distribution detection, in line with what the strict-properness analysis predicts.

\paragraph{Contributions.} Our main contributions are summarized as follows:
\begin{itemize}[leftmargin=*,nosep]
    \item We propose GenCE, a loss that imports the generative learning principle into discriminative classifier training without modifying the architecture or introducing a separate density model.
    \item We develop a non-local proper-scoring-rule framework for analyzing losses whose per-sample contributions are coupled, and prove that GenCE is strictly proper under a mild completeness condition on the conditional distribution.
    \item We empirically show that GenCE improves accuracy, calibration, and OOD detection over cross-entropy and standard alternatives in the small-data regime across multiple datasets and architectures.
\end{itemize}

\section{Related Work}
Our work touches on three lines of research: methods that target the small-data regime, methods that import generative structure into classification, and methods that modify the classification loss itself. We summarize each below.

\textbf{Learning with limited labels.} The dominant response to label scarcity has been to act on the data. Classical augmentation expands the training set through geometric and photometric transformations such as cropping, flipping, color jitter, and noise injection, and it remains a standard component of any small-data pipeline. Mixup~\cite{zhang2017mixup} reframed augmentation as interpolation in input and label space, training the network on convex combinations of pairs of examples and encouraging locally linear behavior between them. The idea has since branched into a family of related methods: Manifold Mixup~\cite{verma2019manifold} interpolates in feature space, CutMix~\cite{yun2019cutmix} mixes localized image regions, and AdaMixUp~\cite{guo2019mixup} and AutoMix~\cite{liu2022automix} learn the mixing policy itself. A second line of work replaces hand-designed augmentation with learned synthesis: GANs and VAEs are trained to model the data distribution and produce additional samples, and class-conditional adversarial generators have been used to manufacture examples near the decision boundary in extremely low-data settings. A third line, transfer and meta-learning, sidesteps the data problem altogether by importing representations from large pretrained models or related tasks. Although these three families differ in the mechanism, they share a common locus: all of them operate at the data level, enlarging, replacing, or borrowing samples while the loss used to fit the classifier remains cross-entropy.

\textbf{Generative perspectives on classification.} The discriminative--generative distinction predates deep learning. \cite{disvsgen} showed that, for paired models such as logistic regression and naive Bayes, the discriminative model achieves a lower asymptotic error, while the generative model converges to its asymptote with far fewer samples. \cite{disvsgen2} extended this analysis to multiclass settings under linear evaluation, showing that the sample-efficiency gap between naive Bayes and logistic regression persists on top of deep features. A separate body of work pursues this idea constructively, building hybrid or fully generative deep classifiers, including energy-based models~\cite{jem} that read off class probabilities from a learned joint density, normalizing-flow classifiers, and diffusion-based classifiers that compare class-conditional likelihoods. Earlier hybrid generative--discriminative formulations were investigated in~\cite{hybrid1,hybrid2}, and generative classifiers have been shown to provide robustness to adversarial attacks~\cite{robust} and noisy labels~\cite{robust2}. These methods, however, commit to modeling $p(\x|y)$ explicitly, and inherit the architectural and optimization cost of doing so. Our approach borrows the motivation from this line of work but not its machinery: we keep a standard discriminative network and import the generative principle through the loss alone.

\textbf{Modifications to the classification loss.} A separate strand of work asks whether cross-entropy is the right objective even when data are plentiful. Focal loss~\cite{lin2017focal} down-weights well-classified examples to focus learning on hard ones, and was later shown to improve calibration as a side effect~\cite{calibrationusingfocal,dualfocal,adafocal}. Robust regression-style losses such as MAE and MSE on the softmax output have been studied as alternatives that are less sensitive to label noise~\cite{brierscore}. All of these methods modify the per-sample loss; they remain pointwise in the sense of the proper-scoring-rule theory we develop in Section~\ref{sec:theory}. GenCE departs from this convention: each sample's loss depends on the model's predictions for the rest of the batch, and analyzing such non-local losses requires the framework we introduce.

\section{Methodology}

\subsection{Problem Formulation}\label{sec:methods:problem}
We consider a standard supervised $K$-class classification problem. Let $\mathcal{D}=\{(\x_i,y_i)\}_{i=1}^N$ be a training set of $N$ i.i.d.\ samples drawn from a joint distribution $\mathcal{P}$ on $\X\times\Y$, where $\X\subset\R^d$ is the feature space and $\Y=\{1,2,\dots,K\}$ indexes the $K$ classes. Without loss of generality we take $\X=\text{supp}(\mathcal{P}_\X)$, since any region of $\R^d$ outside the support is $\mathcal{P}_\X$-negligible. A forecasting function (classifier) $f:\X\to\Delta^{K-1}$ maps each input feature to a distribution over $\Y$, where $\Delta^{K-1}=\{\boldsymbol{p}\in\R^K:p_k\ge 0,\,\sum_{k=1}^K p_k=1\}$ denotes the probability simplex. In modern deep learning, $f$ is instantiated as a parametrized neural network $f_\theta$ trained by minimizing a loss over $\mathcal{D}$. Because $f_\theta(\x)$ is intended to approximate the true posterior $p(y|\x)$, we adopt the notation $p_\theta(y|\x):=[f_\theta(\x)]_y$ for the predicted probability assigned to class $y$ given $\x$, and write $p_\theta(\cdot|\x)$ for the full predicted distribution. We further assume that, for each class $k$, the conditional feature distribution is continuous with density $p(\x|y=k)$, so that the true posterior $q(k|\x)=\mathcal{P}(y=k|\x)$ is continuous on $\X$. The classifier $p_\theta(\cdot|\x)$ is also continuous in $\x$, since it is the composition of continuous neural-network operations. These continuity properties enable the density-based analysis used in the derivations that follow and let us drop almost-sure qualifiers in the population-level statements of Section~\ref{sec:theory}.

\subsection{Cross-Entropy as Maximum Likelihood Estimation}
Before introducing our objective, we briefly recall why cross-entropy is the standard choice. The justification is statistical: minimizing CE is equivalent to maximum likelihood estimation (MLE) under a categorical model. Given the predicted distributions $p_\theta(\cdot|\x_i)$, the likelihood of the observed labels under $\mathcal{D}$ is $\prod_{i=1}^N p_\theta(y_i|\x_i)$. Taking the negative logarithm and normalizing by $N$ recovers the empirical cross-entropy
\begin{equation}
\mathcal{L}_\text{CE}(\theta)
=-\frac{1}{N}\sum_{i=1}^N \log p_\theta(y_i|\x_i)
=-\frac{1}{N}\sum_{i=1}^N \sum_{k=1}^K \mathbf{1}_{[y_i=k]}\log p_\theta(k|\x_i),
\end{equation}
where the second equality follows from the one-hot encoding of labels. Cross-entropy is, in this sense, MLE applied to the categorical likelihood.

\subsection{Generative Cross-Entropy}\label{sec:gence}
While cross-entropy is effective for training classifiers, its purely discriminative nature has known limitations: it does not exploit the generative structure of the data. Prior work~\cite{disvsgen,disvsgen2} shows that generative classifiers, which model the class-conditional distribution $p(\x|y)$, often converge in fewer training samples and are less prone to overfitting than their discriminative counterparts, because they leverage additional information about the data distribution. Motivated by this observation, one might consider directly minimizing the conditional log-likelihood
\begin{equation}
\mathcal{L}_\text{Gen}(\theta)
=-\frac{1}{N}\sum_{i=1}^N \log p_\theta(\x_i|y_i)
=-\frac{1}{N}\sum_{i=1}^N \sum_{k=1}^K \mathbf{1}_{[y_i=k]}\log p_\theta(\x_i|k).
\end{equation}
Doing so, however, requires modeling the high-dimensional density and ensuring the normalization $\int p_\theta(\x|y)\,\text{d}\x=1$, which is impractical for the natural-image classifiers we care about. We therefore aim for a less ambitious goal: can we retain a discriminative model $p_\theta(y|\x)$ while still optimizing the generative objective above? Bayes' theorem provides the bridge:
\begin{equation} \label{bayes}
    p_\theta(\x|y) \;=\; \frac{p_\theta(y|\x)\,p(\x)}{p_\theta(y)},
\end{equation}
where the marginal class probability $p_\theta(y)$ admits the integral representation
\begin{equation} \label{marginal}
    p_\theta(y) \;=\; \int p_\theta(y|\x)\,p(\x)\,\text{d}\x \;=\; \mathbb{E}_{\x\sim p(\x)}\big[p_\theta(y|\x)\big].
\end{equation}
Given a training set $\{(\x_i,y_i)\}_{i=1}^N$, we approximate~\eqref{marginal} by its empirical average over the training samples:
\begin{equation} \label{empirical_marginal}
    p_\theta(y) \;\approx\; \frac{1}{N}\sum_{j=1}^{N} p_\theta(y|\x_j).
\end{equation}
Substituting~\eqref{bayes} and~\eqref{empirical_marginal} into the generative log-likelihood, the joint negative log-likelihood of the class-conditional model over the training set becomes
\begin{equation} \label{joint}
    -\sum_{i=1}^N \log p_\theta(\x_i|y_i)
    \;\approx\; -\sum_{i=1}^N \log \frac{p_\theta(y_i|\x_i)\,p(\x_i)}{\frac{1}{N}\sum_{j=1}^N p_\theta(y_i|\x_j)}.
\end{equation}
Since $p(\x_i)$ and the constant $1/N$ do not depend on $\theta$, they can be omitted from the optimization. We thus define the training objective, termed \textit{Generative Cross-Entropy} (GenCE), as
\begin{equation} \label{gence}
    \mathcal{L}_\text{GenCE}(\theta)
    \;=\; -\frac{1}{N}\sum_{i=1}^{N} \log \frac{p_\theta(y_i|\x_i)}{\sum_{j=1}^{N} p_\theta(y_i|\x_j)}.
\end{equation}
By construction, $\mathcal{L}_\text{GenCE}$ is equivalent to maximizing the class-conditional log-likelihood up to a constant independent of~$\theta$.

\paragraph{Practical considerations.}
In~\eqref{gence}, the denominator $\sum_{j=1}^N p_\theta(y_i|\x_j)$ requires a sum over the entire training set, which is computationally prohibitive for large datasets. In practice, we approximate this sum within each mini-batch $\mathcal{B}$ of size $B$:
\begin{equation} \label{gence_batch}
    \mathcal{L}_\text{GenCE}^{\mathcal{B}}(\theta)
    \;=\; -\frac{1}{B}\sum_{i \in \mathcal{B}} \log \frac{p_\theta(y_i|\x_i)}{\sum_{j \in \mathcal{B}} p_\theta(y_i|\x_j)}.
\end{equation}
The $j=i$ term is included, consistent with~\eqref{joint} and ensuring $\mathcal{L}_\text{GenCE}^{\mathcal{B}}\ge 0$; Section~\ref{sec:experiments} confirms robustness across batch sizes.

\paragraph{Why ``generative''?}
Although $\mathcal{L}_{\mathrm{GenCE}}$ uses only the discriminative posterior $p_\theta(y \mid x)$, its derivation from the conditional log-likelihood $-\sum_i \log p_\theta(x_i \mid y_i)$ is exact up to a constant in $\theta$. The within-batch normalization is the channel through which the data marginal $p(x)$ enters the objective, with no explicit density estimator and no added regularizer.

\section{Theoretical Analysis}\label{sec:theory}
In this section, we establish the population-level foundation of GenCE. Our main result (Theorem~\ref{thm:properness}) shows that, despite its non-local structure, GenCE is a strictly proper loss: its population risk is uniquely minimized at the true conditional distribution. As we discuss below, strict properness is precisely the statistical guarantee that aligns loss minimization with probabilistic calibration, and it is the property that justifies cross-entropy as a classification objective in the first place. Theorem~\ref{thm:properness} shows that GenCE inherits this guarantee.

\subsection{From pointwise to non-local scoring rules}
The classical justification for cross-entropy goes through the theory of proper scoring rules, which evaluate a forecast against an observed label on a per-sample basis.

\begin{definition}[Scoring Rule]
\label{def:scoring rule}
A scoring rule $\Score$ is a function
\begin{equation}
\Score:\Delta^{K-1}\times\{1,\dots,K\}\to\R.
\end{equation}
\end{definition}

Under this definition, the empirical risk decomposes as a mean of independent per-sample contributions, and properness can be analyzed sample-wise. GenCE, however, does not fit this framework. As shown in~\eqref{gence}, the contribution of sample $(\x_i,y_i)$ to the GenCE objective involves the denominator $\sum_j p_\theta(y_i|\x_j)$, which couples the sample's score to every other training point sharing the same class. GenCE, therefore, cannot be written as an expectation of any pointwise scoring rule, and its statistical properties cannot be derived from the classical theory.

To analyze GenCE, we introduce a generalization in which the score is allowed to depend on the entire forecaster across the input space.

\begin{definition}[Probability Kernel]
\label{def:probability kernel}
A probability kernel is a mapping $\kappa:\X\to\Delta^{K-1}$, where
\begin{equation}
\Delta^{K-1}=\{(a_1,\dots,a_K)\in\R^K:a_k\ge 0,\,\sum_{k=1}^K a_k=1\}.
\end{equation}
\end{definition}
For any $\x\in\X$, the output $\kappa(\x)$ can be interpreted as a posterior distribution over the $K$ classes conditioned on $\x$. With a slight abuse of notation, we write $q(\cdot|\cdot)$ in place of $\kappa$ throughout the remainder of the paper.

\begin{definition}[Non-local Scoring Rule]
\label{def:non-local scoring rule}
Let $\mathcal{K}$ denote the space of probability kernels $q(\cdot|\cdot):\X\to\Delta^{K-1}$. A non-local scoring rule $\Score$ is a functional
\begin{equation}
\Score:\mathcal{K}\times\X\times\{1,\dots,K\}\to\R.
\end{equation}
\end{definition}
Every pointwise scoring rule is trivially a non-local scoring rule that ignores its first argument, so Definition~\ref{def:non-local scoring rule} strictly generalizes Definition~\ref{def:scoring rule}.

\begin{definition}[Population Risk] Given the data distribution $\mathcal{P}_\X$, a probability kernel $q(\cdot|\cdot)$, and a classifier $p_\theta$, the population risk of a non-local scoring rule $\Score$ is defined as
    \begin{equation}
        \mathcal{R}(\Score;\mathcal{P}_\X,q,p_\theta)
        \;=\; \E_{\x\sim\mathcal{P}_\X}\,\E_{y\sim q(\cdot|\x)}\big[\Score(p_\theta,\x,y)\big].
    \end{equation}
\end{definition}

\subsection{Strict properness and calibration}
We now ask the natural question: under its population risk, does GenCE prefer truthful forecasts?

\begin{definition}[Proper and Strictly Proper]
\label{def:properness}
A non-local scoring rule $\Score:\mathcal{K}\times\X\times\{1,\dots,K\}\to\R$ is \emph{proper} if, for every data distribution $\mathcal{P}_\X$, every probability kernel $q(\cdot|\cdot)$, and every classifier $p_\theta$,
\begin{equation}
    \mathcal{R}(\Score;\mathcal{P}_\X,q,q) \;\le\; \mathcal{R}(\Score;\mathcal{P}_\X,q,p_\theta),
\end{equation}
and \emph{strictly proper} if equality holds only when $p_\theta=q$.
\end{definition}
Strict properness is the central population-level guarantee that one demands of a probabilistic classification loss. Taking $q$ to be the true posterior $p(\cdot|\x)$, strict properness states that the unique minimizer of the population risk is the true conditional distribution itself. In other words, a strictly proper loss aligns optimization with calibration: a forecaster that minimizes the population risk produces predictions $p_\theta(\cdot|\x)=q(\cdot|\x)$ almost surely, which is the strongest form of probabilistic calibration. Conversely, any deviation from the true posterior incurs a strictly positive excess risk, so a strictly proper loss exerts a principled, distribution-aware pressure toward calibrated probabilities, even before any finite-sample considerations are taken into account. Cross-entropy is the canonical example of a strictly proper pointwise scoring rule. Our main result shows that GenCE inherits the same property under a mild condition on the data.

\begin{definition}[Completeness of a Probability Kernel]
\label{def:completeness}
A probability kernel $q(\cdot|\cdot):\X\to\Delta^{K-1}$ is said to be \emph{complete} if, for every $\boldsymbol{\alpha}\in\R^K$,
\begin{equation}
\sum_{k=1}^K \alpha_k\,q(k|\x)=0 \;\;\text{for every }\x\in\X \;\Longleftrightarrow\; \boldsymbol{\alpha}=\boldsymbol{0}.
\end{equation}
\end{definition}
The completeness condition is mild and typically satisfied in practice. Two sufficient conditions are:
\begin{enumerate}[leftmargin=*,nosep]
    \item \emph{Anchor points.}~\cite{liu2015classification} For each class $k\in[K]$ there is at least one point $\x_k\in\X$ with $q(k|\x_k)>1/2$.
    \item \emph{Gaussian mixture model.} The class-conditional densities are pairwise distinct Gaussians with positive mixing weights.
\end{enumerate}
Proofs of both sufficient conditions are given in Appendix~\ref{appendix:completeness}.

We are now ready to state our main theoretical result.
\begin{theorem}[Strict Properness of GenCE]
\label{thm:properness}
Suppose the probability kernel $q(\cdot|\cdot)$ is complete. Then $\mathcal{L}_\text{GenCE}$ is uniquely minimized at $p_\theta=q$; that is, $\mathcal{L}_\text{GenCE}$ is strictly proper.
\end{theorem}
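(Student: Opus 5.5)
The plan is to first pin down the population version of $\mathcal{L}_\text{GenCE}$ as a non-local scoring rule. As $N\to\infty$, the law of large numbers turns the normalized denominator $\frac1N\sum_j p_\theta(y_i|\x_j)$ in~\eqref{gence} into $m_{y_i}(p_\theta)$, where $m_k(p):=\E_{\x'\sim\mathcal{P}_\X}[p(k|\x')]$ is exactly the marginal~\eqref{marginal}. The constant $\log N$ produced by this normalization does not depend on $\theta$ and can be dropped. The natural choice is therefore
\begin{equation*}
\Score(p,\x,y)\;=\;-\log p(y|\x)+\log m_y(p),
\end{equation*}
which is a non-local scoring rule in the sense of Definition~\ref{def:non-local scoring rule}. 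I would state explicitly that the theorem concerns this population risk, not the mini-batch surrogate~\eqref{gence_batch}.

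\textbf{Step 1: rewrite the risk as a sum of KL divergences.} Write $\pi_k:=m_k(q)=\E_\x[q(k|\x)]$. Completeness forces $\pi_k>0$ for every $k$. Indeed, if $\pi_k=0$, then $q(k|\cdot)=0$ on $\X=\text{supp}(\mathcal{P}_\X)$ by continuity, and $\boldsymbol\alpha=e_k$ would violate Definition~\ref{def:completeness}. If instead some $m_k(p)=0$, then $p(k|\x)=0$ on $\X$ by continuity, so the risk is $+\infty$ and such $p$ can be discarded. For the remaining $p$ with all $m_k(p)>0$, define normalized densities with respect to $\mathcal{P}_\X$:
\begin{equation*}
\tilde p_k(\x)=\frac{p(k|\x)}{m_k(p)},\qquad \tilde q_k(\x)=\frac{q(k|\x)}{\pi_k}.
\end{equation*}
Both integrate to one under $\mathcal{P}_\X$. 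These are the ``Bayes-rewritten'' class conditionals $p(\x|k)/p(\x)$ from~\eqref{bayes}. Then
\begin{equation*}
\mathcal{R}(\Score;\mathcal{P}_\X,q,p)=-\sum_{k}\E_\x\big[q(k|\x)\log\tilde p_k(\x)\big]=-\sum_k\pi_k\,\E_\x\big[\tilde q_k(\x)\log\tilde p_k(\x)\big].
\end{equation*}
Subtracting the same expression at $p=q$ gives
\begin{equation*}
\mathcal{R}(\Score;\mathcal{P}_\X,q,p)-\mathcal{R}(\Score;\mathcal{P}_\X,q,q)=\sum_k\pi_k\,\mathrm{KL}\big(\tilde q_k\,\|\,\tilde p_k\big)\;\ge\;0 .
\end{equation*}
This proves properness. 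Finiteness of $\mathcal{R}(\cdot,q,q)$ can be checked from $q\log q\in[-1/e,0]$ and $\pi_k\le1$.

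\textbf{Step 2: identify the equality case.} Equality forces every KL term to vanish, since all $\pi_k>0$. Hence $\tilde p_k=\tilde q_k$ $\mathcal{P}_\X$-a.e., and by continuity on the support this holds everywhere on $\X$. Equivalently, $p(k|\x)=c_k\,q(k|\x)$ for all $\x\in\X$, where $c_k=m_k(p)/\pi_k$.

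\textbf{Step 3: conclude with completeness.} This is where the main obstacle lies: the KL argument only identifies each class-conditional, so the forecaster is recovered only up to per-class rescalings. The generative objective cannot see the class priors, and completeness is exactly what removes this ambiguity. Summing the identity from Step 2 over $k$ and using that both $p(\cdot|\x)$ and $q(\cdot|\x)$ lie in the simplex gives
\begin{equation*}
\sum_k (c_k-1)\,q(k|\x)=0\quad\text{for every }\x\in\X .
\end{equation*}
Completeness then yields $c_k=1$ for all $k$, hence $p=q$, which is strict properness.

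The remaining care points are measure-theoretic. First, we need continuity to pass from a.e.\ to everywhere and to justify $\pi_k>0$. Second, the limit argument linking~\eqref{gence} to $\Score$ should be stated as the population definition rather than proved uniformly in $\theta$.
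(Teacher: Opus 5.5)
Your proposal is correct and follows essentially the same route as the paper: your identity $\mathcal{R}(\Score;\mathcal{P}_\X,q,p)-\mathcal{R}(\Score;\mathcal{P}_\X,q,q)=\sum_k\pi_k\,\mathrm{KL}(\tilde q_k\|\tilde p_k)$ is exactly the paper's per-class expectation-form log-sum inequality, which the paper proves by Jensen under the tilted measure with density $h/\E[h]$ (that is, your normalization). The rest also matches: you upgrade from a.e.\ to everywhere on $\X=\text{supp}(\mathcal{P}_\X)$ by continuity, then use completeness to remove the per-class rescalings; the only cosmetic difference is that you write $p(k|\x)=c_k\,q(k|\x)$ with $c_k=m_k(p)/\pi_k>0$, whereas the paper writes $q(k|\x)=c_k\,p_\theta(k|\x)$ and must first argue $c_k>0$ before applying completeness to $(1-c_k)/c_k$.
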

\vspace{-10pt}
\begin{proof}
See Appendix~\ref{proof:properness}.
\end{proof}

\paragraph{Implications for calibration.} Theorem~\ref{thm:properness} guarantees that, in the population limit, the GenCE-optimal classifier recovers the true posterior exactly: the same population guarantee enjoyed by cross-entropy, but reached through a fundamentally different mechanism. Whereas cross-entropy attains this minimum through pointwise likelihood matching, GenCE does so by exploiting the generative factorization of the joint distribution (Section~\ref{sec:gence}) while remaining a discriminative training objective. In Section~\ref{sec:experiments}, we show that this implicit marginal-alignment regularization, derived from this factorization, translates into measurably better calibration and stronger generalization in the small-data regime, where the asymptotic guarantee of cross-entropy is far from being reached.

\section{Experimental Results}\label{sec:experiments}


\subsection{Experimental setup}\label{sec:exp:setup}

\textbf{Datasets.} Our benchmarks are \textbf{CIFAR-10/100}~\cite{cifar10}, with $32\times 32$ color images, and \textbf{Mini-ImageNet}~\cite{miniimagenet}, a $100$-class subset of ImageNet~\cite{imagenet} at $84\times 84$. The small-data regime uses stratified subsets of size $N\in\{2000,5000\}$. For class imbalance we adopt the class-imbalanced CIFAR-10/100 variants~\cite{cifar10lt} with imbalance factors $\rho\in\{10,100\}$. Out-of-distribution (OOD) detection takes CIFAR-10 as in-distribution data, paired with CIFAR-100 (Near-OOD, covariate shift) and SVHN~\cite{svhn} (Far-OOD, semantic shift).

\textbf{Baselines.} We compare GenCE against five widely used training losses: cross-entropy (CE); mean absolute error (MAE), the $\ell_1$ distance between the softmax output and the one-hot label; Brier loss~\cite{brierscore}, its $\ell_2$ counterpart; generalized cross-entropy (GCE)~\cite{zhang2018generalized}, a Box--Cox-style interpolation between MAE and CE robust to label noise; and focal loss~\cite{lin2017focal}, which down-weights easy examples and is known to improve calibration. Architecture, optimizer, and augmentation are held fixed across baselines so that only the loss varies.

\textbf{Training and evaluation.} We use \textbf{ResNet-50}~\cite{resnet} and \textbf{VGG-19}~\cite{simonyan2014very}, trained from scratch with SGD (momentum $0.9$, weight decay $5\times 10^{-4}$) for $200$ epochs under a step schedule of $0.1\to 0.01\to 0.001$ at epochs $0$, $100$, $150$. The default batch size is $B{=}100$, and the sensitivity study sweeps $B\in\{25,50,100,200\}$. Augmentation comprises $4$-pixel padded crops, horizontal flips, RandAugment~\cite{cubuk2020randaugment}, MixUp~\cite{zhang2017mixup} and CutMix~\cite{yun2019cutmix}. Each configuration is averaged over $5$ random seeds; we report mean$_{\pm\text{std}}$ of accuracy, ECE~\cite{ece} ($M{=}15$ equal-width bins), and AUROC for OOD. In wall-clock terms, the within-batch normalization is essentially free: GenCE is on average $0.3\%$ slower per epoch than CE on a single A100.

\textit{\textbf{Details on the datasets, baseline losses, evaluation metrics, and additional comparisons with Supervised contrastive learning and label smoothing are provided in Appendices~\ref{appendix:datasets}--\ref{appendix:extraexp}.}}

\subsection{Classification accuracy in the small-data regime}\label{sec:exp:acc}

Table~\ref{tab:acc} reports test accuracy on twelve configurations; GenCE wins every cell, but the size of the win is what carries the message. Where CE already operates near a data-rich plateau (CIFAR-10, $N{=}5000$), the gap is around a point on either backbone. Where CE clearly struggles (CIFAR-100/VGG at $N{=}2000$), GenCE yields a gap of $9.5$ points. The improvement is also asymmetric across backbones: on CIFAR-100 and Mini-ImageNet, VGG benefits more than ResNet at every sample size, exactly the regime where the network's own architectural prior contributes least. The within-batch coupling acts as a complement to the discriminative signal rather than a replacement, contributing little when CE already fits the data and increasing in importance as that signal weakens.

\begin{table}[htbp]
\centering
\vspace{-10pt}
\caption{Classification accuracy (\%) on balanced small-data subsets across datasets and architectures. Each entry reports mean$_{\pm\text{std}}$ over 5 random seeds. The best result is highlighted in \textbf{bold}.}
\label{tab:acc}
\setlength{\tabcolsep}{6pt}
\renewcommand{\arraystretch}{1.}
\resizebox{\linewidth}{!}{%
\begin{tabular}{l c l *{6}{c}}
\toprule
\textbf{Dataset} & \textbf{$N$} & \textbf{Model} & \textbf{GenCE} & \textbf{CE} & \textbf{GCE} & \textbf{Focal Loss} & \textbf{MAE} & \textbf{Brier Loss} \\
\midrule
\multirow{4}{*}{CIFAR-10}
  & \multirow{2}{*}{5000} & ResNet & $\mathbf{85.20_{\pm 0.35}}$ & $84.05_{\pm 0.53}$ & $82.73_{\pm 0.35}$ & $83.96_{\pm 0.54}$ & $83.66_{\pm 0.17}$ & $83.82_{\pm 0.20}$ \\
  &                       & VGG    & $\mathbf{83.54_{\pm 0.26}}$ & $83.00_{\pm 0.25}$ & $82.48_{\pm 0.35}$ & $82.38_{\pm 0.39}$ & $82.70_{\pm 0.57}$ & $82.71_{\pm 0.32}$ \\
\cmidrule(lr){2-9}
  & \multirow{2}{*}{2000} & ResNet & $\mathbf{76.61_{\pm 0.68}}$ & $72.51_{\pm 0.94}$ & $71.45_{\pm 1.90}$ & $74.13_{\pm 0.95}$ & $74.17_{\pm 1.01}$ & $72.94_{\pm 1.02}$ \\
  &                       & VGG    & $\mathbf{74.62_{\pm 0.65}}$ & $70.38_{\pm 1.93}$ & $69.20_{\pm 0.72}$ & $69.74_{\pm 0.75}$ & $71.14_{\pm 1.54}$ & $71.06_{\pm 1.28}$ \\
\midrule
\multirow{4}{*}{CIFAR-100}
  & \multirow{2}{*}{5000} & ResNet & $\mathbf{51.74_{\pm 0.29}}$ & $50.35_{\pm 0.16}$ & $48.10_{\pm 0.81}$ & $50.07_{\pm 0.35}$ & $49.97_{\pm 0.21}$ & $49.30_{\pm 0.44}$ \\
  &                       & VGG    & $\mathbf{41.30_{\pm 0.81}}$ & $35.64_{\pm 2.20}$ & $36.33_{\pm 1.26}$ & $37.29_{\pm 1.51}$ & $37.48_{\pm 1.71}$ & $37.73_{\pm 0.76}$ \\
\cmidrule(lr){2-9}
  & \multirow{2}{*}{2000} & ResNet & $\mathbf{32.06_{\pm 0.47}}$ & $31.15_{\pm 1.11}$ & $28.87_{\pm 1.08}$ & $29.32_{\pm 1.18}$ & $30.83_{\pm 0.31}$ & $31.28_{\pm 0.55}$ \\
  &                       & VGG    & $\mathbf{24.40_{\pm 1.12}}$ & $14.92_{\pm 0.58}$ & $13.98_{\pm 1.13}$ & $14.78_{\pm 1.77}$ & $16.18_{\pm 0.59}$ & $15.76_{\pm 0.68}$ \\
\midrule
\multirow{4}{*}{Mini-ImageNet}
  & \multirow{2}{*}{5000} & ResNet & $\mathbf{43.71_{\pm 1.04}}$ & $41.94_{\pm 0.90}$ & $41.21_{\pm 0.78}$ & $42.30_{\pm 0.38}$ & $41.94_{\pm 0.92}$ & $42.35_{\pm 0.76}$ \\
  &                       & VGG    & $\mathbf{32.45_{\pm 0.92}}$ & $31.50_{\pm 0.04}$ & $31.68_{\pm 0.39}$ & $30.88_{\pm 0.07}$ & $30.57_{\pm 0.62}$ & $30.98_{\pm 0.55}$ \\
\cmidrule(lr){2-9}
  & \multirow{2}{*}{2000} & ResNet & $\mathbf{27.75_{\pm 0.44}}$ & $24.91_{\pm 0.38}$ & $22.91_{\pm 0.85}$ & $24.82_{\pm 0.30}$ & $25.03_{\pm 0.46}$ & $24.76_{\pm 0.07}$ \\
  &                       & VGG    & $\mathbf{19.68_{\pm 1.07}}$ & $16.76_{\pm 0.60}$ & $18.37_{\pm 0.53}$ & $17.08_{\pm 0.81}$ & $17.06_{\pm 0.45}$ & $17.09_{\pm 0.55}$ \\
\bottomrule
\end{tabular}}
\end{table}

\subsection{Class-imbalanced classification}\label{sec:exp:lt}

Class-imbalanced labels represent data insufficiency: minority classes lack examples despite a sufficient training budget. This small-sample challenge, which motivates GenCE, reappears here, focused on the class axis rather than across the dataset.
Table~\ref{tab:acc_lt} extends the comparison to class-imbalanced CIFAR-10/100 on a ResNet-50 backbone. GenCE leads in all four cells, and the absolute gain over CE increases as $\rho$ grows from $10$ to $100$. The mechanism behind this scaling is visible in the loss itself: the denominator $\sum_{j\in\mathcal{B}} p_\theta(y_i|\x_j)$ in~\eqref{gence_batch} tracks the model's predicted class marginal, so a head class that is over-represented in the batch inflates its own denominator and dampens its gradient. In contrast, minority-class gradients pass through largely undamped. The result is an implicit rebalancing, no class weights, no resampling, that strengthens precisely as the imbalance worsens.
A separate line of work directly addresses class imbalance, such as logit adjustment~\cite{logitadjustment}, and is largely orthogonal to our approach; most of these methods revert to standard cross-entropy when the label distribution is balanced. In contrast, GenCE affects the loss in both balanced small-data and imbalanced regimes, so we don't consider it a main comparator.

\begin{table}[htbp]
\centering
\vspace{-10pt}
\caption{Classification accuracy (\%) on class-imbalanced CIFAR-10 and CIFAR-100 across different methods. All experiments use ResNet-50. The best result in each row is highlighted in \textbf{bold}.}
\label{tab:acc_lt}
\setlength{\tabcolsep}{6pt}
\renewcommand{\arraystretch}{1}
\resizebox{\linewidth}{!}{%
\begin{tabular}{l c *{6}{c}}
\toprule
\textbf{Dataset} & \textbf{$\rho$} & \textbf{GenCE} & \textbf{CE} & \textbf{GCE} & \textbf{Focal Loss} & \textbf{MAE} &\textbf{Brier Loss}  \\
\midrule
\multirow{2}{*}{CIFAR-10}
& 10  & $\mathbf{87.00_{\pm 0.23}}$ & $86.31_{\pm 0.23}$ & $86.67_{\pm 0.19}$ & $86.44_{\pm 0.32}$ & $86.46_{\pm 0.34}$ & $86.37_{\pm 0.30}$ \\
& 100 & $\mathbf{70.84_{\pm 0.55}}$ & $69.12_{\pm 0.52}$ & $68.38_{\pm 0.11}$ & $68.35_{\pm 0.56}$ & $68.70_{\pm 0.20}$ & $68.34_{\pm 0.41}$ \\
\midrule
\multirow{2}{*}{CIFAR-100}
& 10  & $\mathbf{59.13_{\pm 0.28}}$ & $57.69_{\pm 0.30}$ & $56.65_{\pm 0.77}$ & $57.64_{\pm 0.25}$ & $58.10_{\pm 0.42}$ & $57.96_{\pm 0.53}$ \\
& 100 & $\mathbf{40.31_{\pm 0.36}}$ & $38.42_{\pm 0.05}$ & $37.36_{\pm 0.51}$ & $37.94_{\pm 0.20}$ & $37.98_{\pm 0.18}$ & $38.42_{\pm 0.52}$ \\
\bottomrule
\end{tabular}}
\end{table}

\subsection{Calibration error}\label{sec:exp:ece}

A classifier is perfectly calibrated when $\mathbb{P}(\hat y=y\mid \hat p=p)=p$ for every confidence level $p$. The expected calibration error~\cite{ece} estimates the deviation by binning predictions into $M$ equal-width confidence intervals and averaging the gap between bin accuracy and bin confidence,
\begin{equation}
    \text{ECE} \;=\; \sum_{m=1}^M \tfrac{|B_m|}{N}\, \big| \text{conf}(B_m)-\text{acc}(B_m) \big|.
\end{equation}
All ECE values are computed from the raw softmax output of the trained classifier; we do not apply temperature scaling~\cite{oncalibration} or post-hoc rescaling. While effective in data-rich settings, these corrections need a validation split large enough to fit rescaling parameters. The small-data setting of this paper does not support such a split, and using the training set for this purpose would undermine our comparison.
GenCE attains the lowest ECE in five of six configurations of Table~\ref{tab:ece_mix}. The single exception is CIFAR-10 with $N{=}5000$, where focal loss, an objective engineered to suppress the high-confidence tail, edges ahead. Still, it pays more than a point of accuracy for that calibration. The advantage over CE is most pronounced on Mini-ImageNet, where GenCE substantially reduces the calibration gap at both sample sizes. The shape of the loss makes this almost a corollary of how it optimizes: pointwise objectives reward inflating $p_\theta(y_i|\x_i)$ on each example independently, whereas GenCE divides that score by the model's average confidence on the same class across the batch, so a uniform inflation is a no-op at the objective level. Calibration improves not because of an extra regularizer but because indiscriminate confidence is no longer rewarded.

\begin{table}[htbp]
\centering
\vspace{-10pt}
\caption{Expected Calibration Error (ECE, \%) across datasets and training-set sizes (ResNet). Lower is better. The best result in each row is in \textbf{bold}.}
\label{tab:ece_mix}
\setlength{\tabcolsep}{6pt}
\renewcommand{\arraystretch}{1.}
\resizebox{\linewidth}{!}{%
\begin{tabular}{l c *{6}{c}}
\toprule
\textbf{Dataset} & \textbf{$N$} & \textbf{GenCE} & \textbf{CE} & \textbf{GCE} & \textbf{Focal Loss} & \textbf{MAE} & \textbf{Brier Loss} \\
\midrule
\multirow{2}{*}{CIFAR-10}
  & 5000 & $6.82_{\pm 0.55}$ & $7.82_{\pm 0.67}$ & $9.61_{\pm 0.43}$ & $\mathbf{4.48_{\pm 0.33}}$ & $9.17_{\pm 0.19}$ & $7.65_{\pm 0.59}$ \\
  & 2000 & $\mathbf{10.64_{\pm 0.39}}$ & $14.82_{\pm 0.74}$ & $17.77_{\pm 1.24}$ & $11.05_{\pm 0.60}$ & $16.00_{\pm 0.55}$ & $14.38_{\pm 1.28}$ \\
\midrule
\multirow{2}{*}{CIFAR-100}
  & 5000 & $\mathbf{10.25_{\pm 0.69}}$ & $15.14_{\pm 0.44}$ & $19.72_{\pm 0.67}$ & $10.98_{\pm 0.66}$ & $17.65_{\pm 0.43}$ & $14.26_{\pm 0.77}$ \\
  & 2000 & $\mathbf{22.32_{\pm 0.80}}$ & $28.34_{\pm 0.74}$ & $35.31_{\pm 1.16}$ & $25.88_{\pm 0.29}$ & $31.18_{\pm 1.01}$ & $27.58_{\pm 0.41}$ \\
\midrule
\multirow{2}{*}{Mini-ImageNet}
  & 5000 & $\mathbf{13.24_{\pm 0.92}}$ & $23.00_{\pm 0.71}$ & $28.88_{\pm 0.26}$ & $18.23_{\pm 1.02}$ & $25.04_{\pm 0.61}$ & $22.07_{\pm 0.13}$ \\
  & 2000 & $\mathbf{23.82_{\pm 0.17}}$ & $32.94_{\pm 0.93}$ & $35.83_{\pm 0.73}$ & $30.39_{\pm 0.48}$ & $34.49_{\pm 0.87}$ & $32.55_{\pm 0.59}$ \\
\bottomrule
\end{tabular}}
\end{table}

Figure~\ref{fig:reliability} shows the same effect graphically. The pointwise baselines (CE, GCE, Focal Loss, MAE, Brier Loss) all sit below the diagonal in the high-confidence bins—the bins that dominate ECE—while GenCE hugs the diagonal across the full confidence range.

\begin{figure}[t]
    \centering
    \includegraphics[width=0.9\linewidth]{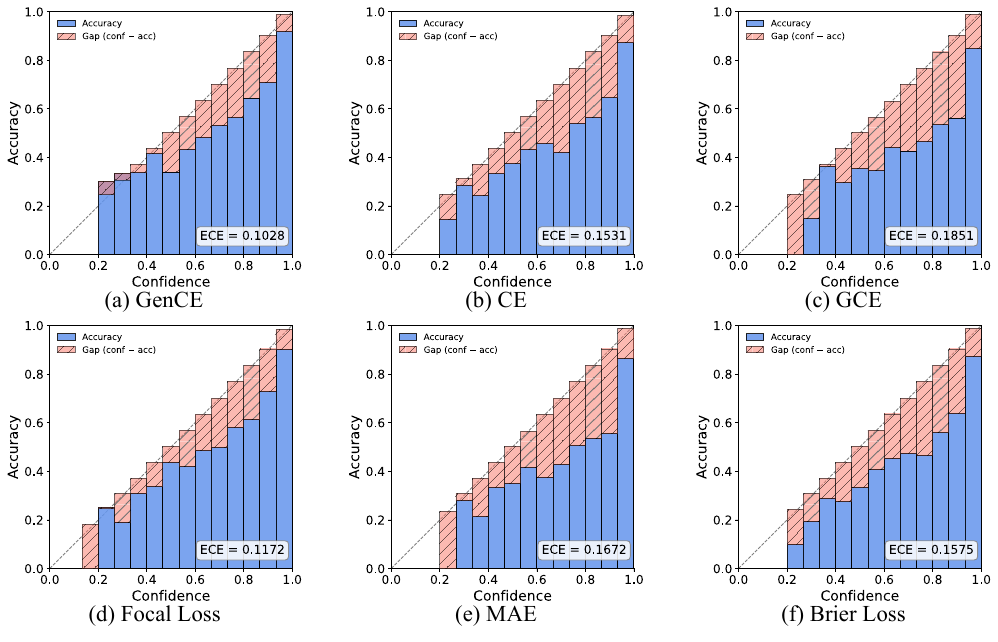}
    \vspace{-10pt}
    \caption{Reliability diagrams computed on CIFAR-10 with 15 bins, the dashed line ($y=x$) marks perfect calibration. Bars below (above) the diagonal indicate over- (under-) confidence; the gap between each bar and the diagonal contributes to ECE.}
    \label{fig:reliability}
    \vspace{-10pt}
\end{figure}

\subsection{Out-of-distribution detection}\label{sec:exp:ood}

Following~\cite{hendrycks2016baseline}, we score each input by softmax entropy ($\text{AUC}_\text{entropy}$) and by negative maximum softmax probability ($\text{AUC}_\text{confidence}$); AUROCs are reported in Table~\ref{tab:ood}. GenCE leads in every cell. The Near-OOD margin is modest (about a point), but the Far-OOD entropy gap is substantial, and equally telling is the variance: GenCE's seed-to-seed standard deviation is $1.33$, against CE's $3.43$. The CE failure mode is concrete: on individual seeds, it produces highly confident labels for SVHN digits that bear no resemblance to any CIFAR-10 class, collapsing the entropy gap on those runs. Batch-level normalization quietly suppresses this: a confident answer must be relatively confident relative to the rest of the batch, which is hard to satisfy for uniformly unfamiliar inputs.

\begin{table*}[htbp]
\vspace{-5pt}
  \centering
  \small
  \caption{AUROC (\%) of models trained on CIFAR-10 as in-distribution data and evaluated on Near-OOD (CIFAR-100) and Far-OOD (SVHN) regimes. Each entry reports mean$_{\pm\text{std}}$ over 5 random seeds. Higher is better. The best value in each row is in \textbf{bold}.}
  \vspace{-6pt}
  \label{tab:ood}
  \resizebox{\linewidth}{!}{%
  \begin{tabular}{llcccccc}
    \toprule
    \textbf{Dataset} & \textbf{Metric}
      & \textbf{GenCE}
      & \textbf{CE}
      & \textbf{GCE}
      & \textbf{Focal Loss}
      & \textbf{MAE}
      & \textbf{Brier Loss} \\
    \midrule
    \multirow{2}{*}{Near-OOD}
      & $\text{AUC}_\text{entropy}$
        & $\mathbf{87.22_{\pm 0.11}}$ & $86.24_{\pm 0.47}$ & $86.60_{\pm 0.62}$  & $86.36_{\pm 0.62}$
        & $86.18_{\pm 0.20}$ & $86.79_{\pm 0.17}$  \\
      & $\text{AUC}_\text{confidence}$
        & $\mathbf{86.98_{\pm 0.09}}$ & $86.06_{\pm 0.46}$ & $86.40_{\pm 0.60}$ & $86.14_{\pm 0.58}$
        & $86.03_{\pm 0.20}$ & $86.56_{\pm 0.17}$  \\
    \midrule
    \multirow{2}{*}{Far-OOD}
      & $\text{AUC}_\text{entropy}$
        & $\mathbf{94.87_{\pm 1.33}}$ & $88.40_{\pm 3.43}$ & $90.41_{\pm 2.09}$ & $87.57_{\pm 3.20}$
        & $91.61_{\pm 1.00}$ & $93.84_{\pm 0.50}$ \\
      & $\text{AUC}_\text{confidence}$
        & $\mathbf{94.33_{\pm 1.32}}$  & $88.16_{\pm 3.19}$ & $90.00_{\pm 2.04}$ & $87.31_{\pm 3.95}$
        & $91.34_{\pm 0.89}$ & $93.33_{\pm 0.48}$ \\
    \bottomrule
  \end{tabular}}
\end{table*}

\subsection{Sensitivity analysis}\label{sec:exp:sens}

\setlength{\columnsep}{15pt}
\begin{wrapfigure}[12]{r}{0.37\linewidth}
    \centering
    \vspace{-0pt}
    \includegraphics[width=1\linewidth]{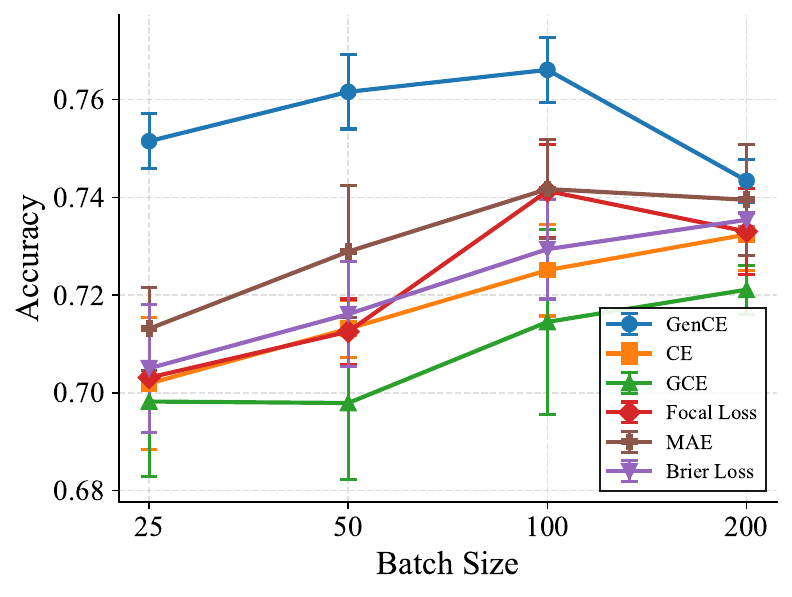}
    \vspace{-12pt}
    \caption{Accuracy vs.\ batch size $B$ for all losses on $N{=}2000$ CIFAR-10.}
    \label{fig:bs_cifar10}
    \vspace{-10pt}
\end{wrapfigure}

{\hyphenpenalty=10000 \exhyphenpenalty=10000 \emergencystretch=3em
\textbf{Effect of batch size.} Because GenCE ties each sample's score to the rest of its mini-batch, a natural concern is whether the loss becomes more fragile under changes in $B$ than its pointwise counterparts. Sweeping $B\in\{25,50,100,200\}$ on $N{=}2000$ CIFAR-10 with everything else held fixed, Figure~\ref{fig:bs_cifar10} indicates otherwise: GenCE traces essentially the same accuracy-versus-$B$ profile as CE and the other baselines, peaking in the middle of the range and degrading similarly at $B{=}25$, where per-batch averages become too noisy, and at $B{=}200$, where only ten batches per epoch leave the optimizer too few updates. The within-batch normalization, therefore, introduces no batch-size sensitivity beyond that already imposed by mini-batch training.
\par}
\section{Conclusion}
Generative Cross-Entropy is a small, theoretically motivated change to the cross-entropy objective: each sample's softmax score is divided by the model's average response to the same class across the batch. The modification adds no parameters, no separate density model, and no architectural change, yet it carries the sample-efficiency advantage of generative classifiers into a fully discriminative network. We extended the proper-scoring-rule framework to this non-local setting and showed that the population risk of GenCE is uniquely minimized at the true posterior under a mild completeness condition. Across CIFAR-10/100 and Mini-ImageNet, two backbones, and both balanced small-data and class-imbalanced regimes, GenCE-trained classifiers are more accurate, better calibrated, and more reliable on out-of-distribution inputs than CE, with the gap widening as data becomes scarcer or more imbalanced—evidence that the loss itself, more than the data, is a useful and under-examined axis for label-efficient learning.

\textbf{Limitations and future directions.}
The accuracy of GenCE depends on how faithfully each batch reflects the class marginal $p_\theta(y)$; the sensitivity analysis (Section~\ref{sec:exp:sens}) confirms robustness across practical batch sizes, with class-stratified sampling available for rare-class regimes. Two directions remain open: tightening the population-level guarantee of Theorem~\ref{thm:properness} into a finite-sample analysis, and applying GenCE to text, speech, and structured prediction together with the augmentation or pretraining pipelines it is designed to complement.

\bibliographystyle{plain}
\bibliography{neurips_2026}

\newpage
\appendix

\section{Technical Appendices and Supplementary Material}

\subsection{Proof of Theorem~\ref{thm:properness}}\label{proof:properness}
The proof of Theorem~\ref{thm:properness} relies on an expectation form of the log-sum inequality, which we state and prove first; the strict-properness argument then follows.

\begin{lemma}[Expectation form of log-sum inequality]
\label{lemma:log sum}
For any two bounded measurable nonnegative functions $h,g:\X\to\R_{\ge 0}$ and any distribution $\mathcal{P}_\X$,
\begin{equation}
\label{eq:exp-log-sum}
    \E_{\x\sim\mathcal{P}_\X}\!\left[g(\x)\log\frac{g(\x)}{h(\x)}\right]
    \;\ge\;
    \E_{\x\sim\mathcal{P}_\X}\!\left[g(\x)\right]\log
    \frac{\E_{\x\sim\mathcal{P}_\X}[g(\x)]}{\E_{\x\sim\mathcal{P}_\X}[h(\x)]},
\end{equation}
with equality if and only if $g(\x)=c\cdot h(\x)$ holds $\mathcal{P}_\X$-almost surely for some constant $c\ge 0$. Furthermore, if $h$ and $g$ are continuous, then equality holds if and only if $g(\x)=c\cdot h(\x)$ for every $\x\in\X$.
\end{lemma}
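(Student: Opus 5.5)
The natural route is Jensen's inequality applied to the convex function $\phi(t)=t\log t$ under a reweighted measure. Write $G=\E[g]$ and $H=\E[h]$, and dispose of the degenerate cases first, with the conventions $0\log(0/a)=0$ and $a\log(a/0)=+\infty$ for $a>0$. If $G=0$, then $g=0$ almost surely, so both sides vanish and $g=0\cdot h$ gives equality. If $H=0$ but $G>0$, then $h=0$ almost surely while $g>0$ on a set of positive measure; the left side is $+\infty$, as is the right side. Equality in the extended sense then holds with no constant $c$. I would therefore state the lemma under the standing assumption $H>0$, which is the case used later since $h$ will be a predicted probability averaged over the population.

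For the main case $G,H>0$, define the probability measure $d\mathcal{Q}=(h/H)\,d\mathcal{P}_\X$ and the ratio $t(\x)=g(\x)/h(\x)$ on $\{h>0\}$. The left side of \eqref{eq:exp-log-sum} splits into an integral over $\{h>0\}$ and one over $\{h=0\}$. The part over $\{h=0,\,g>0\}$ contributes $+\infty$ if that set has positive measure, and then the inequality is trivially strict. So we may assume $g=0$ almost surely on $\{h=0\}$, which means $\E_{\mathcal{P}}[g]=\E_{\mathcal{P}}[h\,t]=H\,\E_{\mathcal{Q}}[t]$.

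The left side then equals $H\,\E_{\mathcal{Q}}[\phi(t)]$. Jensen's inequality gives
\[
H\,\E_{\mathcal{Q}}[\phi(t)]\ \ge\ H\,\phi\bigl(\E_{\mathcal{Q}}[t]\bigr)=H\cdot\frac{G}{H}\log\frac{G}{H}=G\log\frac{G}{H}.
\]
Boundedness of $g$ and $h$ ensures every expectation here is finite, or equals $+\infty$ only on the left.

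For equality, strict convexity of $\phi$ on $[0,\infty)$ forces $t$ to be $\mathcal{Q}$-a.s. constant, say $t=c=G/H$. This gives $g=c\,h$ $\mathcal{P}$-a.s. on $\{h>0\}$; combined with $g=0$ a.s. on $\{h=0\}$, it gives $g=c\,h$ $\mathcal{P}$-a.s. The converse is direct substitution: if $g=c\,h$, both sides equal $cH\log c$.

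The main obstacle is bookkeeping of the null and zero sets. The most delicate point is that $\mathcal{Q}$-a.s. constancy only controls $\{h>0\}$, which is why the separate argument on $\{h=0\}$ is needed.

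For the continuous upgrade, the set $\{\x: g(\x)\ne c\,h(\x)\}$ is open in $\X$ by continuity and has $\mathcal{P}_\X$-measure zero. Since $\X=\mathrm{supp}(\mathcal{P}_\X)$ by the standing convention of Section~\ref{sec:methods:problem}, every nonempty open subset of $\X$ has positive measure, so this set is empty.
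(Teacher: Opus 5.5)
Your proposal is correct and follows essentially the same route as the paper: dispose of the degenerate cases, apply Jensen's inequality to $t\log t$ under the reweighted measure $(h/\E[h])\,\text{d}\mathcal{P}_\X$, and read off the equality case from strict convexity together with $g=0$ a.s.\ on $\{h=0\}$. For the continuous upgrade you use $\X=\mathrm{supp}(\mathcal{P}_\X)$; your ``open null set is empty'' argument is the complement of the paper's ``closed set of full measure is all of $\X$'' argument. Your observation that the case $\E[h]=0<\E[g]$ gives $+\infty=+\infty$ with no admissible constant $c$ is accurate: the paper's proof notes that both sides are infinite there but never reconciles this with the ``only if'' clause, so adding the standing assumption $\E[h]>0$, as you suggest, is the right repair of the statement.
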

\begin{proof}
Throughout, write $\E[\cdot]$ for $\E_{\x\sim\mathcal{P}_\X}[\cdot]$ and adopt the standard conventions $0\log 0:=0$ and, for $a>0$, $a\log(a/0):=+\infty$.

\textbf{Reduction to the nondegenerate case.}
If $\E[h]=0$, then $h=0$ a.s. If furthermore $g=0$ a.s., both sides of~\eqref{eq:exp-log-sum} equal $0$ and equality holds with $c=0$; otherwise $\E[g]>0$ and both sides equal $+\infty$. If instead $\E[g]=0$, then $g=0$ a.s.\ and both sides equal $0$. We may therefore assume $\E[h]>0$ and $\E[g]>0$. Moreover, if $\mathcal{P}_\X\{h=0,\,g>0\}>0$, then $g\log(g/h)=+\infty$ on that set, so the left-hand side of~\eqref{eq:exp-log-sum} is $+\infty$ and the inequality is trivial. We may therefore further assume
\begin{equation}\label{eq:supp-cond}
    g(\x)=0 \quad \mathcal{P}_\X\text{-a.s.\ on }\{h=0\}.
\end{equation}

\textbf{An auxiliary probability measure.}
Define a probability measure $\mathcal{Q}_\X$ on $\X$ by
\[
    \frac{\text{d}\mathcal{Q}_\X}{\text{d}\mathcal{P}_\X}(\x) \;=\; \frac{h(\x)}{\E[h]}.
\]
This is well defined because $h\ge 0$ and $\E[h]>0$, and it satisfies $\mathcal{Q}_\X(\{h=0\})=0$. Define $T:\X\to[0,\infty]$ by
\[
    T(\x)\;=\;\begin{cases} g(\x)/h(\x), & h(\x)>0,\\ 0, & h(\x)=0;\end{cases}
\]
the value on $\{h=0\}$ is immaterial $\mathcal{Q}_\X$-a.s.

\textbf{Jensen's inequality.}
The function $\varphi(t)=t\log t$, with $\varphi(0):=0$, is continuous on $[0,\infty)$ and strictly convex on $(0,\infty)$. Jensen's inequality under $\mathcal{Q}_\X$ gives
\begin{equation}\label{eq:jensen}
    \E_{\mathcal{Q}_\X}[\varphi(T)]\;\ge\;\varphi(\E_{\mathcal{Q}_\X}[T]).
\end{equation}
Computing both sides under $\mathcal{P}_\X$ and using~\eqref{eq:supp-cond}:
\[
    \E_{\mathcal{Q}_\X}[T]
    \;=\;\frac{1}{\E[h]}\,\E[g\,\mathbf{1}_{\{h>0\}}]
    \;=\;\frac{\E[g]}{\E[h]},
\]
and, since $g\log(g/h)=0$ on $\{h=0\}$ by~\eqref{eq:supp-cond} and the convention $0\log 0=0$,
\[
    \E_{\mathcal{Q}_\X}[\varphi(T)]
    \;=\;\frac{1}{\E[h]}\,\E\!\left[g\log\frac{g}{h}\right].
\]
Substituting into~\eqref{eq:jensen} yields
\[
    \frac{\E[g\log(g/h)]}{\E[h]}\;\ge\;\frac{\E[g]}{\E[h]}\log\frac{\E[g]}{\E[h]}.
\]
Multiplying by $\E[h]>0$ gives~\eqref{eq:exp-log-sum}.

\textbf{Equality case.}
Strict convexity of $\varphi$ on $(0,\infty)$ implies that equality in~\eqref{eq:jensen} holds if and only if $T$ is $\mathcal{Q}_\X$-a.s.\ constant; that is, there exists $c\ge 0$ such that $g(\x)/h(\x)=c$ for $\mathcal{P}_\X$-a.e.\ $\x\in\{h>0\}$. Combined with~\eqref{eq:supp-cond}, this is equivalent to $g(\x)=c\cdot h(\x)$ $\mathcal{P}_\X$-a.s. The converse is immediate. If, in addition, $h$ and $g$ are continuous, then the set $A=\{\x\in\X:g(\x)=c\cdot h(\x)\}$ is closed (as the zero set of a continuous function). Since $\mathcal{P}_\X(A)=1$ and $\X=\text{supp}(\mathcal{P}_\X)$ is the smallest closed set of full measure, $\X\subseteq A\subseteq \X$, and $g(\x)=c\cdot h(\x)$ holds for every $\x\in\X$.
\end{proof}

We now turn to the proof of Theorem~\ref{thm:properness}. Recall from Section~\ref{sec:methods:problem} that the true posterior $q(\cdot|\cdot)$ and the classifier $p_\theta(\cdot|\cdot)$ are both continuous on $\X$; this lets us upgrade the equality case of Lemma~\ref{lemma:log sum} from ``$\mathcal{P}_\X$-a.s.''\ to ``everywhere on $\X$'' throughout the proof.

\begin{proof}[Proof of Theorem~\ref{thm:properness}]
The population risk of GenCE can be written as
\begin{equation}
    \begin{aligned}
        \mathcal{R}_\text{GenCE}(\mathcal{P}_\X, q, p_\theta)
        & = -\E_{\x\sim\mathcal{P}_\X}\,\E_{y\sim q(\cdot|\x)}\!\left[\log\frac{p_\theta(y|\x)}{\E_{\x'\sim\mathcal{P}_\X}[p_\theta(y|\x')]}\right] \\
        & = -\E_{\x\sim\mathcal{P}_\X}\sum_{k=1}^K q(k|\x)\log\frac{p_\theta(k|\x)}{\E_{\x'\sim\mathcal{P}_\X}[p_\theta(k|\x')]}.
    \end{aligned}
\end{equation}
Denote $\pi_k:=\E_{\x\sim\mathcal{P}_\X}[q(k|\x)]$ and $s_k:=\E_{\x\sim\mathcal{P}_\X}[p_\theta(k|\x)]$. The excess risk is
\begin{equation}
    \begin{aligned}
        \Delta\mathcal{R}\;
        &:=\;\mathcal{R}_\text{GenCE}(\mathcal{P}_\X,q,p_\theta)-\mathcal{R}_\text{GenCE}(\mathcal{P}_\X,q,q)\\
        &=\;\E_{\x\sim\mathcal{P}_\X}\sum_{k=1}^K q(k|\x)\log\frac{s_k\,q(k|\x)}{\pi_k\,p_\theta(k|\x)}\\
        &=\;\sum_{k=1}^K \E_{\x\sim\mathcal{P}_\X}\!\left[q(k|\x)\log\frac{q(k|\x)}{p_\theta(k|\x)}\right]
        - \sum_{k=1}^K \pi_k\log\frac{\pi_k}{s_k},
    \end{aligned}
\end{equation}
where we used $\E_{\x}[q(k|\x)]=\pi_k$ in the last step. By Lemma~\ref{lemma:log sum} applied with $g(\x)=q(k|\x)$ and $h(\x)=p_\theta(k|\x)$ — both continuous on $\X$ by assumption — we have, for each $k\in[K]$,
\begin{equation}
    \E_{\x\sim\mathcal{P}_\X}\!\left[q(k|\x)\log\frac{q(k|\x)}{p_\theta(k|\x)}\right]
    \;\ge\;\pi_k\log\frac{\pi_k}{s_k},
\end{equation}
with equality if and only if there exists a constant $c_k\ge 0$ such that $q(k|\x)=c_k\,p_\theta(k|\x)$ for every $\x\in\X$ (the continuous version of the equality case in Lemma~\ref{lemma:log sum}). Summing over $k$ shows $\Delta\mathcal{R}\ge 0$, with equality if and only if for every $k$ there exists $c_k\ge 0$ with $q(k|\x)=c_k\,p_\theta(k|\x)$ for every $\x\in\X$.

It remains to show that, in the equality case, $c_k=1$ for every $k$, so that $q=p_\theta$ on $\X$. First, $c_k>0$ for every $k$: if $c_k=0$ for some $k$, then $q(k|\x)=0$ for every $\x\in\X$, which contradicts the completeness of $q$ (taking $\boldsymbol{\alpha}$ to be the $k$-th canonical vector gives $\sum_{j}\alpha_j\,q(j|\x)=q(k|\x)=0$ for every $\x\in\X$ with $\boldsymbol{\alpha}\neq\mathbf{0}$). Hence $p_\theta(k|\x)=q(k|\x)/c_k$ for every $\x\in\X$. Combining $\sum_k p_\theta(k|\x)=1$ with $\sum_k q(k|\x)=1$,
\begin{equation}\label{eq:final-completeness}
    \sum_{k=1}^K \frac{1-c_k}{c_k}\,q(k|\x)
    \;=\;\sum_{k=1}^K\!\left(\frac{1}{c_k}-1\right)q(k|\x)
    \;=\;\sum_{k=1}^K p_\theta(k|\x)-\sum_{k=1}^K q(k|\x)
    \;=\;0
    \quad\text{for every }\x\in\X.
\end{equation}
Setting $\alpha_k:=(1-c_k)/c_k$, completeness of $q$ implies $\alpha_k=0$ for all $k$, i.e.\ $c_k=1$ for all $k$. Therefore $q(\cdot|\x)=p_\theta(\cdot|\x)$ for every $\x\in\X$, which establishes the strict properness of $\mathcal{L}_\text{GenCE}$.
\end{proof}

\subsection{Sufficient conditions for completeness}\label{appendix:completeness}

We give two sufficient conditions under which the kernel $q(\cdot|\cdot)$ satisfies the completeness condition of Definition~\ref{def:completeness}.

\subsubsection*{Example 1: anchor points}
\textbf{Assumption.} For each class $k\in[K]$ there exists $\x_k\in\X$ such that $q(k|\x_k)>1/2$.

\begin{proof}
Suppose $\sum_k\alpha_k\,q(k|\x)=0$ for all $\x\in\X$. Set $M:=\max_k|\alpha_k|$ and pick $k^\star\in\arg\max_k|\alpha_k|$. We show $M=0$.

Assume toward contradiction that $M>0$.

\smallskip
\textit{Case 1: $\alpha_{k^\star}=M$.} Evaluating the identity at $\x_{k^\star}$,
\[
0 \;=\; \alpha_{k^\star}\,q(k^\star|\x_{k^\star})+\sum_{k\neq k^\star}\alpha_k\,q(k|\x_{k^\star}).
\]
Since $\alpha_k\ge -M$ for every $k$ and $\sum_{k\neq k^\star}q(k|\x_{k^\star})=1-q(k^\star|\x_{k^\star})$,
\[
\sum_{k\neq k^\star}\alpha_k\,q(k|\x_{k^\star})\;\ge\;-M\bigl(1-q(k^\star|\x_{k^\star})\bigr).
\]
Therefore
\[
0\;\ge\;M\,q(k^\star|\x_{k^\star})-M\bigl(1-q(k^\star|\x_{k^\star})\bigr)\;=\;M\bigl(2q(k^\star|\x_{k^\star})-1\bigr)\;>\;0,
\]
because $q(k^\star|\x_{k^\star})>1/2$, a contradiction.

\smallskip
\textit{Case 2: $\alpha_{k^\star}=-M$.} By the symmetric bound $\alpha_k\le M$,
\[
0\;\le\;-M\,q(k^\star|\x_{k^\star})+M\bigl(1-q(k^\star|\x_{k^\star})\bigr)\;=\;-M\bigl(2q(k^\star|\x_{k^\star})-1\bigr)\;<\;0,
\]
again a contradiction. Hence $M=0$ and $\boldsymbol{\alpha}=\mathbf{0}$.
\end{proof}

\subsubsection*{Example 2: Gaussian mixture model}
\textbf{Assumption.} The class-conditional densities are $\mathcal{P}(\x|y=k)=\phi_k(\x)=\mathcal{N}(\x;\boldsymbol{\mu}_k,\boldsymbol{\Sigma}_k)$ with $\boldsymbol{\Sigma}_k\succ 0$, mixing weights $\pi_k>0$, and pairwise distinct components $(\boldsymbol{\mu}_k,\boldsymbol{\Sigma}_k)\neq(\boldsymbol{\mu}_j,\boldsymbol{\Sigma}_j)$ for $k\neq j$. Then by Bayes' rule,
\[
q(k|\x)=\mathcal{P}(y=k|\x)=\frac{\pi_k\,\phi_k(\x)}{\sum_j \pi_j\,\phi_j(\x)}.
\]

\begin{proof}
Suppose $\sum_k\alpha_k\,q(k|\x)=0$ for all $\x\in\R^d$. The denominator of $q(k|\x)$ is strictly positive, so multiplying through gives
\begin{equation}\label{eq:gmm-reduction}
\sum_{k=1}^K \beta_k\,\phi_k(\x)=0\qquad\forall\x\in\R^d,\qquad \beta_k:=\alpha_k\pi_k.
\end{equation}
Since $\pi_k>0$, it suffices to show that~\eqref{eq:gmm-reduction} implies $\boldsymbol{\beta}=\mathbf{0}$, i.e.\ that distinct Gaussian densities are linearly independent.

\smallskip
\textit{Step 1 (analytic continuation).} The Fourier transform of $\phi_k$ is $\widehat{\phi}_k(\boldsymbol{\xi})=\exp\bigl(i\boldsymbol{\mu}_k^\top\boldsymbol{\xi}-\tfrac12\boldsymbol{\xi}^\top\boldsymbol{\Sigma}_k\boldsymbol{\xi}\bigr)$. Applying $\mathcal{F}$ to~\eqref{eq:gmm-reduction},
\[
F(\boldsymbol{\xi}):=\sum_k\beta_k\exp\bigl(i\boldsymbol{\mu}_k^\top\boldsymbol{\xi}-\tfrac12\boldsymbol{\xi}^\top\boldsymbol{\Sigma}_k\boldsymbol{\xi}\bigr)=0\qquad\forall\boldsymbol{\xi}\in\R^d.
\]
Since $F$ is entire on $\mathbb{C}^d$, the identity theorem gives $F\equiv 0$ on $\mathbb{C}^d$. Substituting $\boldsymbol{\xi}=-i\boldsymbol{\eta}$,
\begin{equation}\label{eq:gmm-laplace}
\sum_k \beta_k \exp\bigl(\boldsymbol{\mu}_k^\top\boldsymbol{\eta}+\tfrac12\boldsymbol{\eta}^\top\boldsymbol{\Sigma}_k\boldsymbol{\eta}\bigr)=0\qquad\forall\boldsymbol{\eta}\in\R^d.
\end{equation}

\smallskip
\textit{Step 2 (separating direction).} Choose a unit vector $\mathbf{u}\in\R^d$ such that
\begin{itemize}
\item $\mathbf{u}^\top\boldsymbol{\Sigma}_k\mathbf{u}\neq\mathbf{u}^\top\boldsymbol{\Sigma}_j\mathbf{u}$ whenever $\boldsymbol{\Sigma}_k\neq\boldsymbol{\Sigma}_j$,
\item $\mathbf{u}^\top\boldsymbol{\mu}_k\neq\mathbf{u}^\top\boldsymbol{\mu}_j$ whenever $\boldsymbol{\Sigma}_k=\boldsymbol{\Sigma}_j$ (so necessarily $\boldsymbol{\mu}_k\neq\boldsymbol{\mu}_j$).
\end{itemize}
The set of $\mathbf{u}$ violating either condition is a finite union of proper algebraic subvarieties of $S^{d-1}$ (quadrics $\mathbf{u}^\top(\boldsymbol{\Sigma}_k-\boldsymbol{\Sigma}_j)\mathbf{u}=0$ and hyperplanes $\mathbf{u}^\top(\boldsymbol{\mu}_k-\boldsymbol{\mu}_j)=0$), hence has surface measure zero; such $\mathbf{u}$ exists.

Define $a_k:=\mathbf{u}^\top\boldsymbol{\mu}_k$ and $b_k:=\mathbf{u}^\top\boldsymbol{\Sigma}_k\mathbf{u}>0$. By construction the pairs $\{(b_k,a_k)\}_{k=1}^K$ are pairwise distinct in lexicographic order. Setting $\boldsymbol{\eta}=t\mathbf{u}$ in~\eqref{eq:gmm-laplace},
\begin{equation}\label{eq:gmm-1d}
h(t):=\sum_k\beta_k\exp\bigl(\tfrac12 b_k t^2+a_k t\bigr)=0\qquad\forall t\in\R.
\end{equation}

\smallskip
\textit{Step 3 (asymptotics).} Let $k^\star$ be the unique index that maximizes $(b_k,a_k)$ lexicographically. Dividing~\eqref{eq:gmm-1d} by $\exp(\tfrac12 b_{k^\star}t^2+a_{k^\star}t)$,
\[
0=\beta_{k^\star}+\sum_{k\neq k^\star}\beta_k\exp\bigl(\tfrac12(b_k-b_{k^\star})t^2+(a_k-a_{k^\star})t\bigr).
\]
For each $k\neq k^\star$, either $b_k<b_{k^\star}$ or $b_k=b_{k^\star}$ with $a_k<a_{k^\star}$; in either case the exponent tends to $-\infty$ as $t\to+\infty$. Therefore $\beta_{k^\star}=0$.

\smallskip
\textit{Step 4 (induction).} Removing component $k^\star$ leaves $K-1$ pairwise distinct Gaussians satisfying the same identity. Iterating (formally, induction on $K$, with base case $K=1$ trivial since a Gaussian is nowhere zero) yields $\beta_k=0$ for every $k$, hence $\alpha_k=0$ for every $k$.
\end{proof}

\begin{remark}
Example~1 is a diagonal-dominance argument: the anchor point $\x_k$ isolates class $k$ enough to dominate the contributions from other classes. Example~2 is the classical identifiability of finite Gaussian mixtures, recast as linear independence of the posterior kernel.
\end{remark}


\subsection{Datasets details}\label{appendix:datasets}

This subsection collects general background on the benchmarks used in Section~\ref{sec:experiments}; the precise splits and augmentation pipeline are specified in Section~\ref{sec:exp:setup}.

\textbf{CIFAR-10 and CIFAR-100~\cite{cifar10}.}
Two widely used image-classification benchmarks consist of low-resolution natural images. CIFAR-10 covers ten coarse object categories (e.g.\ \emph{airplane}, \emph{automobile}, \emph{bird}); CIFAR-100 partitions a similarly sized image collection into one hundred fine-grained classes grouped under twenty superclasses. Both datasets are class-balanced and have become a standard testbed for studying sample efficiency, calibration, and robustness because they are small enough to allow rapid iteration but rich enough to expose meaningful differences between methods.

\textbf{Mini-ImageNet~\cite{miniimagenet}.}
A $100$-class subset of ImageNet~\cite{imagenet}, originally introduced as a few-shot benchmark and now used more broadly as a small-scale ImageNet substitute. Compared with CIFAR, Mini-ImageNet has higher input resolution and visually richer images, which makes it a useful intermediate setting between CIFAR-style benchmarks and full ImageNet.

\textbf{Small-data subsets.}
The small-data regime ($N\in\{2000,5000\}$) is constructed by stratified subsampling of the original training partition: each class contributes the same number of samples, drawn under seed-controlled randomness so that different seeds yield different subsamples.

\textbf{Long-tailed CIFAR-10/100~\cite{cifar10lt}.}
A standard class-imbalanced variant of CIFAR-10/100 in which the per-class training count decays exponentially with the class index, controlled by an imbalance factor $\rho$. The exact schedule and values of $\rho$ used here are stated in Section~\ref{sec:exp:setup}.

\textbf{Out-of-distribution evaluation.}
The OOD study takes CIFAR-10 as the in-distribution (ID) data. For \emph{Near-OOD}, CIFAR-100 serves as the negative set: it shares low-level image statistics with CIFAR-10 but introduces a semantic shift through new categories. For \emph{Far-OOD}, we use SVHN~\cite{svhn}, a digit-recognition dataset captured from natural street-view photographs, whose images differ from those in CIFAR-10 in both content and style.

\subsection{Baseline losses}\label{appendix:baselines}

We compare GenCE against five widely used training losses. For each baseline, we adopt the formulation and recommended hyperparameter values given in the corresponding original paper. Throughout, $\boldsymbol{p}=p_\theta(\cdot|\x)\in\Delta^{K-1}$ denotes the predicted distribution, $y$ the integer label, and $p_y$ the predicted probability of the true class.

\begin{itemize}
    \item \textbf{Cross-entropy (CE):} $\mathcal{L}_\text{CE}=-\log p_y$.
    \item \textbf{Mean absolute error (MAE):} $\mathcal{L}_\text{MAE}=\sum_{k=1}^K |p_k-\mathbf{1}_{[k=y]}|$.
    \item \textbf{Brier loss~\cite{brierscore}:} $\mathcal{L}_\text{Brier}=\sum_{k=1}^K (p_k-\mathbf{1}_{[k=y]})^2$.
    \item \textbf{Generalised cross-entropy (GCE):} $\mathcal{L}_\text{GCE}=(1-p_y^q)/q$ for a hyperparameter $q\in(0,1]$.
    \item \textbf{Focal loss~\cite{lin2017focal}:} $\mathcal{L}_\text{Focal}=-(1-p_y)^\gamma\log p_y$ for a focusing parameter $\gamma\ge 0$.
\end{itemize}

GenCE and all baselines share the same architecture, optimizer, learning-rate schedule, augmentation pipeline, and batch composition, so any performance differences are attributable solely to the loss.

\subsection{Evaluation metrics}\label{appendix:eval}

\textbf{Top-1 accuracy.}
The fraction of test inputs whose top-scoring class matches the ground truth, evaluated under a deterministic forward pass with the model in evaluation mode.

\textbf{Expected Calibration Error~\cite{ece}.}
ECE quantifies the discrepancy between predicted confidence and empirical accuracy. Predictions are partitioned into $M$ equal-width bins on the maximum-confidence axis, and the per-bin gap between mean confidence and empirical accuracy is averaged with bin-size weights:
\begin{equation*}
    \text{ECE} \;=\; \sum_{m=1}^M \frac{|B_m|}{N}\,\bigl|\text{conf}(B_m)-\text{acc}(B_m)\bigr|,
\end{equation*}
where $N$ is the test-set size. ECE is a standard summary statistic for calibration, but is known to depend on bin count and binning scheme; we therefore fix the binning across all losses, datasets, and seeds for fair comparison. Reliability diagrams (Figure~\ref{fig:reliability}) provide a complementary view by plotting per-bin accuracy against confidence.

\textbf{AUROC for out-of-distribution detection.}
We treat OOD detection as a binary classification problem: each test input is assigned a scalar score derived from the trained ID classifier, and AUROC is computed with the ID test set as the positive class and the OOD test set as the negative class. AUROC is threshold-free and equals the probability that a uniformly chosen ID input is scored higher than a uniformly chosen OOD input. 
No temperature scaling or post-hoc calibration is applied before scoring; both detectors operate on the raw softmax output of the trained classifier.

\subsection{Extra Comparison}
\label{appendix:extraexp}


To position GenCE in a broader context, we briefly compare it with two well-known training paradigms—supervised contrastive learning and label smoothing—and report a head-to-head comparison in Table~\ref{tab:acc_method_by_data}.

\textbf{Supervised contrastive learning (SupCon)~\cite{khosla2020supervised}.}
SupCon is a representation-learning objective designed to pull together examples of the same class on the unit hypersphere. Each sample $i$ in a batch serves as an anchor; its $\ell_2$-normalized embedding $z_i$ is compared with every other in-batch embedding $z_a$ via cosine similarity, and the loss for $i$ takes the form 
\[-\frac{1}{|P(i)|}\sum_{p\in P(i)}\log\!\frac{\exp(z_i\cdot z_p/\tau)}{\sum_{a\in A(i)}\exp(z_i\cdot z_a/\tau)},\] 
where $P(i)$ collects all other samples of the same class as $i$ and $A(i)$ the rest of the batch. Training proceeds in two stages: the encoder is first optimized with this contrastive objective, then frozen while a separate linear head is fitted using cross-entropy. SupCon thus operates entirely in embedding space and explicitly distinguishes positives from negatives. GenCE differs in each of these dimensions. It operates directly on the classifier's softmax output rather than on $\ell_2$-normalized embeddings; the comparison is between samples as candidates for a given class rather than between embedding pairs; and training is single-stage, no projection head, no augmentation-pair construction, and no separately fitted classifier. Whereas SupCon is most naturally analyzed in the mutual-information framework, GenCE admits a proper-scoring-rule analysis (Theorem~\ref{thm:properness}) and yields a population-level uniqueness guarantee at the true posterior that contrastive losses do not provide.

\textbf{Label smoothing (LS)~\cite{labelsmoothing}.}
Label smoothing replaces the one-hot target $\mathbf{e}_y$ with $(1-\varepsilon)\,\mathbf{e}_y+\varepsilon/K\,\mathbf{1}$, distributing a small mass uniformly across non-target classes; cross-entropy is then evaluated against this softened target. The mechanism is target-side rather than loss-side: the loss remains pointwise, only the regression target changes, and the practical effect is to discourage the network from saturating $p_\theta(y\mid\x)$ at $1$. This intervention sits at a different place in the pipeline than GenCE, which leaves the targets one-hot and instead reshapes the score itself by referencing the rest of the batch. Because the two modifications are largely orthogonal, they compose without conflict, and we evaluate the combination in Table~\ref{tab:acc_method_by_data} alongside their individual variants. We use the standard smoothing factor $\varepsilon=0.1$ throughout.

\begin{table}[htbp]
\centering
\caption{Classification accuracy (\%) on balanced small-data subsets ($N{=}2000$, ResNet-50) across datasets. Each entry reports mean$_{\pm\text{std}}$ over 5 random seeds. Note that SupCon does not support soft targets and is therefore not evaluated under the Label Smoothing setting.}
\label{tab:acc_method_by_data}
\setlength{\tabcolsep}{6pt}
\renewcommand{\arraystretch}{1.}
\resizebox{\linewidth}{!}{%
\begin{tabular}{l *{5}{c}}
\toprule
\textbf{Dataset} & \textbf{GenCE} & \textbf{GenCE w/ Label Smoothing} & \textbf{CE} & \textbf{CE w/ Label Smoothing} & \textbf{SupCon} \\
\midrule
CIFAR-10      & $\mathbf{76.61_{\pm 0.68}}$ & $76.48_{\pm 0.50}$ & $72.51_{\pm 0.94}$ & $73.14_{\pm 0.15}$ & $69.48_{\pm 0.43}$ \\
CIFAR-100     & $32.06_{\pm 0.47}$ & $\mathbf{32.60_{\pm 0.50}}$ & $31.15_{\pm 1.11}$ & $30.80_{\pm 1.04}$ & $23.44_{\pm 0.89}$ \\
Mini-ImageNet & $27.75_{\pm 0.44}$ & $\mathbf{27.98_{\pm 0.51}}$ & $24.91_{\pm 0.38}$ & $26.97_{\pm 0.71}$ & $20.50_{\pm 0.43}$ \\
\bottomrule
\end{tabular}
}
\end{table}

Table~\ref{tab:acc_method_by_data} surfaces two clear patterns. SupCon lags noticeably behind all other methods on all three datasets, consistent with the well-documented difficulty of training a contrastive encoder when per-class sample density is low, precisely the regime that motivates this paper. The effect of label smoothing on CE, by contrast, is dataset-dependent: it helps on some benchmarks, is roughly neutral on others, and slightly hurts elsewhere, in line with prior reports that the gain from smoothed targets is sensitive to dataset and architecture. The same intervention, applied on top of GenCE, has only a marginal effect overall. Across all configurations, the best results are achieved by GenCE, with or without smoothing, and the gap to both CE+LS and SupCon is preserved.


\end{document}